\title{A Vertical Federated Learning Framework for Graph Convolutional Network}
\newtheorem{theorem}{Theorem}[section]
\author{
Xiang Ni$^1$\and
Xiaolong Xu$^{1}$
\and
Linjuan Lyu$^1$\and
Changhua Meng$^1$\and
Weiqiang Wang$^1$
\\
\affiliations
$^1$Ant Group\\
\emails
nixiang85@gmail.com, lingjuanlvsmile@gmail.com\{yiyin.xxl,changhua.mch,weiqiang.wwq\}@antgroup.com
}
\begin{document}

\maketitle
\begin{abstract}
Recently, Graph Neural Network (GNN) has achieved remarkable success in various real-world problems on graph data. 
However in most industries, data exists in the form of isolated islands and the data privacy and security is also an important issue. 
In this paper, we propose FedVGCN, a federated GCN learning paradigm for privacy-preserving node classification task under data vertically partitioned setting, which can be generalized to existing GCN models. Specifically, we split the computation graph data into two parts. 
For each iteration of the training process, the two parties transfer intermediate results to each other under homomorphic encryption. 
We conduct experiments on benchmark data and the results demonstrate the effectiveness of FedVGCN in the case of GraphSage.
\end{abstract}

\section{Introduction}
The protection of user privacy is an important concern in machine learning, as evidenced by the rolling out of the General Data Protection Regulation (GDPR) in the European Union (EU) in May 2018 (\cite{chi2018privacy}). The GDPR is designed to give users to protect their personal data, which motivates us to explore machine learning frameworks with data sharing while not violating user privacy (\cite{balle2018improving,bonawitz2019towards}).

Privacy is an important challenge when data aggregation and collaborative learning happens across different entities~\cite{lyu2020threats,lyu2020privacy}. To address this privacy issue many endeavors have been taken in different directions, among which, two important techniques are differential privacy (\cite{acar2018survey,aono2016scalable}) and fully homomorphic encryption (\cite{zhang2015privacy}). Recent advance in fully homomorphic encryption (FHE) (\cite{yuan2013privacy}) allows users to encrypt data with the public key and offload computation to the cloud. The cloud computes on the encrypted data and generates encrypted results. Without the secret key, cloud simply serves as a computation platform but cannot access any user information. This powerful technique has been integrated with deep learning in the pioneering work of convolutional neural network (\cite{lin2013network,veidinger1960numerical}), known as CryptoNets. 

Moreover, federated Learning~\cite{mcmahan2017communication} provides a privacy-aware solution for scenarios where data is sensitive (e.g., biomedical records, private images, personal text and speech, and personally identifiable information like location, purchase etc.). 
Federated learning allows multiple clients to train a shared model without collecting their data.
The model training is conducted by aggregating
locally-computed updates and the data in clients
will not be transferred to anywhere for data privacy. In 
vertical setting, two data sets share the same sample ID space but differ in feature space~\cite{hardy2017private,yang2019federated}. For example, consider two different companies in the same city, one is a bank, and the other is an e-commerce company. Their user sets are likely to contain most of the residents of the area, so the intersection of their user space is large. Vertical federated learning on logistic regression, xgboost, multi-tasking learning, neural network, transfer learning, etc, have been previously studied (\cite{acar2018survey,mohassel2017secureml,konevcny2016federated,nock2018entity}). 
However, few research has studied how to train federated GNNs in a privacy-preserving manner when data are vertically partitioned, which popularly exists in practice. To fill in this gap, in this paper, we propose a vertical federated learning framework on graph convolutional network (GCN). In particular, we test our ideas on GraphSage (\cite{chen2020fede,krizhevsky2009learning}). 

Our main contributions are the following:\\

\begin{itemize}
 \item We introduce a vertical federated learning algorithm for graph convolutional network in a privacy-preserving setting to provide solutions for federation problems beyond the scope of existing federated learning approaches;\\

 \item We provide a novel approach which adopts additively homomorphic encryption (HE) to ensure privacy, while maintaining accuracy. Experimental results on three benchmark datasets demonstrate that our algorithm significantly outperforms the GNN models trained on the isolated data and achieves comparable performance with the traditional GNN trained on the combined plaintext data. 
\end{itemize}

\section{Preliminaries}
\subsection{Orthogonal Polynomials}
 The set of functions $\{\Psi_0,\Psi_1,...,\Psi_n\}$ in the interval $[a,b]$ is a set of orthogonal functions with respect to a weight function $w$ if
\begin{equation}
<\Psi_i,\Psi_j>=\int_a^b\Psi_i(x)\Psi_j(x)w(x)dx= \delta_{ij}
\end{equation}

\subsection{Least-Squares Approximation}
{\bf Movivation}: Suppose $f\in C[a, b]$, find a orthogonal polynomial $P_n(x)$ of degree at most $n$ to
approximate $f$ such that $\int_a^b(f(x)-P_n(x))^2dx$ is a minimum (\cite{ali2020poly,carothers1998short}).

Let polynomial $P_n(x)$ be $P_n(x)=\sum_{k=0}^na_kx^k$ which minimizes the error
\begin{equation}
E=E(a_0,a_1,...,a_n)=\int_a^b(f(x)-\sum_{k=0}^n(a_kx^k)^2dx 
\end{equation}
The problem is to find $a_0,...,a_n$ that will minimize $E$. The necessary
condition for $a_0,...,a_n$ to minimize $E$ is $\frac{\partial E}{\partial a_j}=0$, which gives the normal
equations:
\begin{equation}
\sum_{k=0}^na_k\int_a^bx^{j+k}dx=\int_a^bx^jf(x)dx\;\;\; {\rm for}\; j=0,1,...,n.
\end{equation}

We can now find a least-square polynomial approximation of the form
\begin{equation}
    p_n(x)=\sum_{i=0}^{n}b_i\Psi_i(x)
\end{equation}
where $\{\Psi_i\}_{i\in\{0,1...,n\}}$ is a set of orthogonal polynomials. The Legendre polynomials $\{P_0(x)=1,P_1(x)=x,P_2(x)=\frac{3}{2}x^2-\frac{1}{2},...$
is a set of orthogonal functions with respect to the weight function 
$w(x) = 1$ over the interval $[-1, 1]$. In this case, the coefficients of the least-squares polynomial approximation $p_n$ are expressed as follows
\begin{equation}
    b_i=\frac{1}{C_i}\int f(x)P_i(x)dx,\; i\in\{0,1,...,n\}
\end{equation}
Based on this approach, we can approximate the ReLU function using a polynomial of degree two as
\begin{equation}
    p(x)=\frac{4}{3\pi a}x^2+\frac{1}{2}x+\frac{a}{2\pi}
\end{equation}
where $a$ is determined by the data you are working on. 

\subsection{Paillier Homomorphic Encryption}
Homomorphic encryption allows secure computation over encrypted data. To cope with operations in deep neural nets (mainly multiplication and addition), we adopt a well-known partially homomorphic encryption system called {\it Paillier} (\cite{chen2020secure,chi2018privacy}). Paillier homomorphic encryption supports unlimited number of additions between ciphertext, and multiplication between a ciphertext and a scalar constant. Given ciphertext $\{[[M_1]], [[M_2]], \cdots, [[M_g]]\}$ and scalar constants $\{s_1 , s_2 , \cdots ,  s_g \}$, we can calculate
$([[M_1]]\otimes s_1)\oplus ([[M_2]]\otimes s_2)\oplus \cdots \oplus([[M_g]]\otimes s_g)$. without knowing the plaintext message. Here, $[[M_g]]$ represents the ciphertext. $\oplus$ is the homomorphic addition with ciphertext and $\otimes$ is the homomorphic multiplication between a ciphertext and a scalar constant (\cite{barni2006privacy,balle2018improving,chi2018privacy}). 

\subsection{Graph Convolutional Neural Networks}
Graph convolutional network (GCN)~\cite{kipf2017semisupervised} generalizes the operation of convolution from grid data to graph data. The main idea is to generate a node $v$’s representation by aggregating its own features $x_v$ and neighbors’ features $x_u$, where $u\in N(v)$. Different from Recurrent Graph Neural Network, GCN stacks multiple graph convolutional layers to extract high-level node representations. GCN plays a central role in building up many other complex GNN models (\cite{chen2020fede}).
Moreover, many other popular algorithms, such as Graph attention networks (GAT, shown in Figure~\ref{fig:gat})~\cite{gat}, and Graph-SAGE (shown in Figure~\ref{fig:sage})~\cite{NIPS2017_6703}), typically perform well in graph tasks. However, GCN is the transductive method which is not suitable for large graphs.

\begin{figure}
\centering
\includegraphics[width=0.45\textwidth]{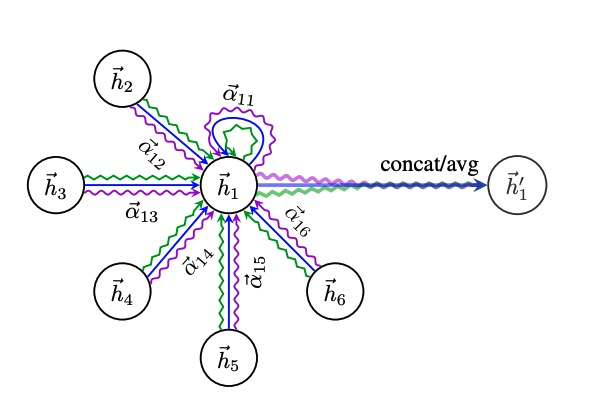}
\caption{A Sample of GAT.}
\label{fig:gat}
\end{figure}

\begin{figure}[t]
\centering
\includegraphics[width=0.45\textwidth]{./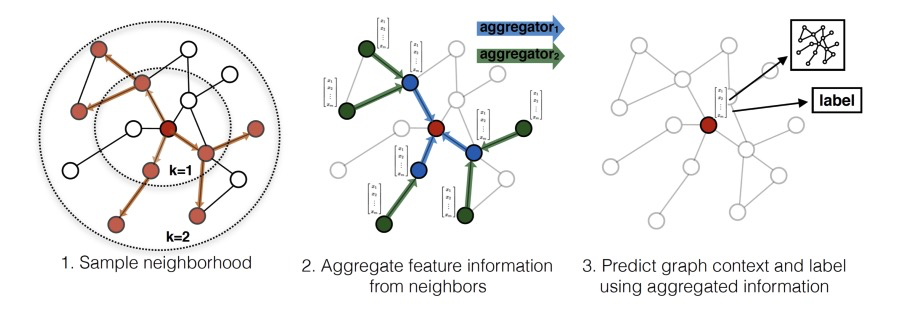}
\caption{A Sample of GraphSAGE.}
\label{fig:sage}
\end{figure}

The main difference between different GNN methods is the aggregation process. 
The general aggregation function can be written as follows:
\begin{equation}
   \hat{h}_{i}^{l}=w_{i,i}^{l}\hat{h}_{i}^{l-1}+w_{i,j}^{l}\hat{h}_{j}^{l-1}~(i \neq j)
\end{equation}

Here, $\hat{h}_{i}^{l}$ is the hidden status in $l$-th layer for the $i$-th node.
Different aggregation functions define different $w_{i,i}$ and $w_{i,j}$. In GraphSAGE method, the aggregation function is mean. And $w_{i,j}$ is calculated by adopting the attention mechanism.

\section{Methodology}
Additively homomorphic encryption and polynomial approximations have been widely used for privacy-preserving machine learning, and the trade-offs between efficiency and privacy by adopting such approximations have been discussed intensively \cite{ali2020poly}. Here we use a second order Taylor approximation for loss and gradients computations:

Define
\begin{equation}
    L(w_1, w_2)=p(w_1h_1+w_2h_2)
\end{equation}
Then
\begin{equation}
    \frac{\partial L}{\partial w_1} = \frac{8h}{3\pi a}(w_1h_1+w_2h_2)*h_1+\frac{1}{2}h_1 
\end{equation}  
\begin{equation}
    \frac{\partial L}{\partial w_2} = \frac{8h}{3\pi a}(w_1h_1+w_2h_2)*h_2+\frac{1}{2}h_2
\end{equation}  

Applying the Homomorphic encryption to $L$ and $\frac{\partial L}{\partial w_i}$ gives
\begin{equation}
[[L]]=\frac{4}{3\pi a}[[(w_1h_1+w_2h_2)^2]]+\frac{1}{2}[[(w_1h_1+w_2h_2)]]+\frac{a}{2\pi}
\end{equation}
\begin{equation}
 [[L_A]]=\frac{4}{3\pi a}[[(w_1h_1)^2]]+\frac{1}{2}[[(w_1h_1]]+\frac{a}{4\pi}
\end{equation}
\begin{equation}
  [[L_B]]=\frac{4}{3\pi a}[[(w_2h_2)^2]]+\frac{1}{2}[[(w_2h_2]]+\frac{a}{4\pi}
\end{equation}
\begin{equation}
 [[L_{AB}]]=\frac{4}{3\pi a}[[2w_1h_1w_2h_2]]
\end{equation}
\begin{equation}
  [[L]] = [[L_A]] + [[L_B]] + [[L_{AB}]]
\end{equation}  
  and 
 \begin{equation}
  [[\frac{\partial L}{\partial w_1}]]= \frac{8h}{3\pi a}[[(w_1h_1+w_2h_2)*h_1]]+\frac{1}{2}[[h_1]]
\end{equation}
\begin{equation}
[[\frac{\partial L}{\partial w_2}]]= \frac{8h}{3\pi a}[[(w_1h_1+w_2h_2)*h_2]]+\frac{1}{2}[[h_2]]
\end{equation}
Here [[x]] is the ciphertext of x. The reason to use quadratic orthogonal polynomials to approximate/replace the relu activation is to preserve the multiplication/addition under homomorphic encryption. 

Suppose that companies A and B would like to jointly train a machine learning model, and their business systems each have their own data. In addition, Company B also has label data that the model needs to predict. We call company A the passive party and the company B the active party. For data privacy and security reasons, the passive party and the active party cannot directly exchange data. In order to ensure the confidentiality of the data during the training process, a third-party collaborator C is involved, which is called the server party. Here we assume the server party is honest-but-curious and does not collude with the passive or the active party, but the passive and active party are honest-but-curious to each other. The passive party trusted the server party is a reasonable assumption since the server party can be played by authorities such as governments or replaced by secure computing node such as Intel Software Guard Extensions (SGX) (\cite{costan2016intel}).

\subsection{Unsupervised loss function}
Existing FL algorithms are mainly for supervised learning. In order to learn useful, predictive representations in a fully unsupervised setting, we adopt the loss function in~\cite{hamilton2017graph}.
That is, the graph-based loss function encourages nearby nodes to have similar representations, while enforcing
that the representations of disparate nodes are highly distinct:
\begin{equation}
    J_{\mathcal{G}} (z_u)=-{\rm log}(\sigma(z_u^Tz_v))-Q\cdot\mathbb{E}_{V_N~P_n(v)}{\rm log}(\sigma(-z_u^Tz_{v_n}))
\end{equation}
where $v$ is a node that co-occurs near $u$ on fixed-length random walk, $\sigma$ is the sigmoid function,
$P_n$ is a negative sampling distribution, and $Q$ defines the number of negative samples. 

\subsection{Algorithms}
In isolated GNNs, both node features and edges are hold by different parties. The first step under vertically data split setting is secure ID alignment, also known as Private Set Intersection (PSI)~\cite{pinkas2014faster}. That is, data holders align their nodes without exposing those that do not overlap with each other. In this work, we assume data holders have aligned their nodes beforehand and are ready for performing privacy preserving GNN training. 

In the case of GCN, the forward propagation and backward propagation of our proposed vertical federated algorithm are presented in Algorithm~\ref{Algorithm:Forward} and Algorithm~\ref{Algorithm:Backward} respectively. Here $N_i$ is the number of edges at party $i$. 
The backward propagation algorithm can be illustrated by Figure~\ref{fig:algo1}.

\begin{algorithm}[t]
 {\bf Active Party}\\
compute $w_1*h_1$ and $N_1$, and send to the server party\\
{\bf Passive Party}\\
compute $w_2*h_2$ and $N_2$, and send to the server party\\
{\bf Server Party}\\
compute $[[w_1*h_1+w_2*h_2]]$ and decrypt them to the active party and the passive party\\
 \caption{FedVGCN: Forward Propagation}\label{Algorithm:Forward}
\end{algorithm}

\begin{algorithm}[t]
 {\bf Initialization:} $w_1, w_2$\\
  {\bf Input:}  learning rate $\eta$, data sample $x$\\
 {\bf Server Party}\\
 create an encryption key pair, send public key to the active party and 
 the passive party.\\
{\bf Passive Party}\\
compute  $[[w_1h_1]], [[L_A]], [[N_1]]$, and send to Active Party, where $N_1$ is the number of neighborhood nodes of the passive party. \\
{\bf Active Party}\\
compute $[[w_2h_2]], [[L]]$,
send $[[w_2h_2]]$ to Passive Party and send $[[L]]$ to the Server Party.\\
{\bf Passive Party}\\
Initialize random noise $\sigma_A$, compute  $[[\frac{\partial L}{\partial w_1}]] +[[\sigma_A]]$ and send to the  Server Party.\\
{\bf Active Party}\\
Initialize random noise $\sigma_B$, compute  $[[\frac{\partial L}{\partial w_2}]] +[[\sigma_B]]$ and send to the Server Party.\\
{\bf Server Party}\\
The Server party decrypts $L$, send $\frac{\partial L}{\partial w_1} +\sigma_A$ to the passive party and $\frac{\partial L}{\partial w_2} +\sigma_B$ to the active party.\\
{\bf return}\\
$w_1,w_2$
 \caption{FedVGCN: Backward Propagation}\label{Algorithm:Backward}
\end{algorithm}

\begin{figure*}
\centering
\includegraphics[width=0.95\textwidth]{./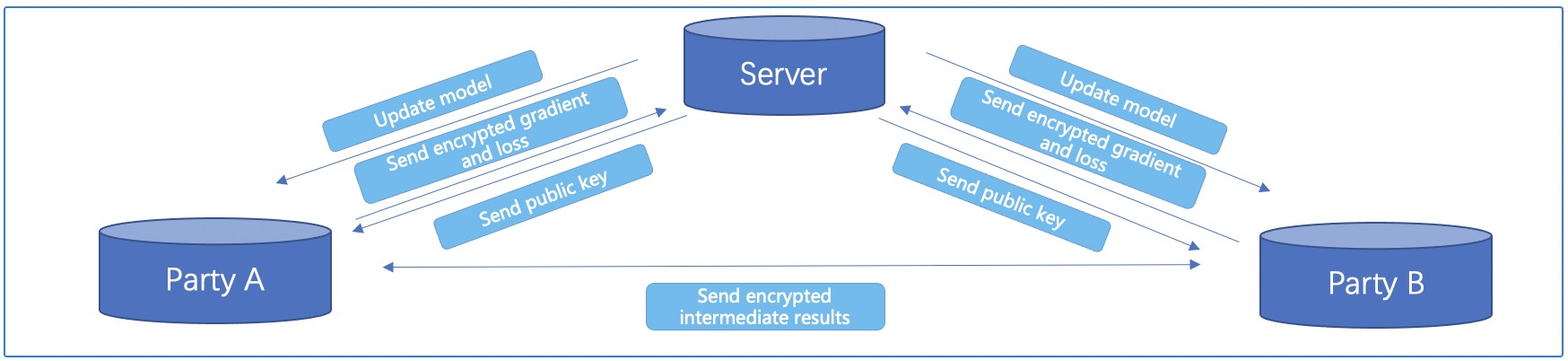}
\caption{FedVGCN: Backward Propagation}
\label{fig:algo1}
\end{figure*}

\section{Security Analysis}

\begin{theorem}
If the number of samples is much greater than the number of features, then the algorithm FedVGCN is secure.
\end{theorem}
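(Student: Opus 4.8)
The plan is to adopt the standard honest-but-curious (semi-honest) security model and argue, via the simulation paradigm, that the \emph{view} of every party during one round of Algorithm~\ref{Algorithm:Forward} and Algorithm~\ref{Algorithm:Backward} can be generated from only that party's own input and output, so that nothing about another party's private features is revealed. First I would enumerate the three views explicitly. The active and passive parties exchange only Paillier ciphertexts ($[[w_1 h_1]]$, $[[L_A]]$, $[[N_1]]$, $[[w_2 h_2]]$) for which neither holds the decryption key, whereas the server receives $[[L]]$ together with the noised gradients $[[\frac{\partial L}{\partial w_1}]]\oplus[[\sigma_A]]$ and $[[\frac{\partial L}{\partial w_2}]]\oplus[[\sigma_B]]$, decrypts the loss, and returns $\frac{\partial L}{\partial w_1}+\sigma_A$ and $\frac{\partial L}{\partial w_2}+\sigma_B$.

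Next I would dispatch the easy leakage channels. For every message that a party receives but cannot decrypt, semantic (IND-CPA) security of the Paillier cryptosystem implies the ciphertext is computationally indistinguishable from an encryption of a fixed value; hence a simulator can replace each such message by a dummy ciphertext, and the active/passive cross-exchange leaks nothing. For the decrypted gradients, each mask $\sigma_A,\sigma_B$ is drawn freshly and independently, so $\frac{\partial L}{\partial w_i}+\sigma_i$ is distributed like a fresh random value to any party that does not know $\sigma_i$: the server therefore learns nothing about the true gradients, while the receiving party (which chose its own mask) recovers only its own gradient, i.e.\ its legitimate output. This reduces the whole question to a single residual channel, namely the plaintext quantities a curious party can legitimately accumulate from its own data, the decrypted loss, and its outputs across the training iterations.

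The crux, and the place where the hypothesis $n \gg d$ ($n$ the number of aligned samples, $d$ the feature/parameter dimension) is essential, is to show that this residual plaintext channel does not permit reconstruction of the other party's raw features. The plan is to model a curious party's accumulated knowledge as a system of equations in the unknown per-sample contributions $w h$ of the other party: over $T$ iterations a party can assemble only $O(Td)$ independent scalar constraints (roughly one group per weight coordinate per step, plus the scalar loss), whereas the unknowns tied to the $n$ samples number $\Omega(n)$; when $n \gg d$ the system is strictly underdetermined, so its solution set is a variety of positive dimension and the raw features cannot be identified. \textbf{The main obstacle} is exactly this counting argument: because $w_1,w_2$ are updated each round, the successive observations are not repeated linear measurements of one vector, so I must argue that the effective number of \emph{independent} constraints stays below the number of unknowns, and must treat the quadratic cross term $[[L_{AB}]]=\frac{4}{3\pi a}[[2 w_1 h_1 w_2 h_2]]$ carefully since it couples the two parties' data multiplicatively. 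I expect controlling this growth of independent equations, and showing that the nonlinearity and per-sample structure keep the system rank-deficient whenever $n \gg d$, to be the delicate step, with the Paillier and random-mask arguments being routine by comparison.
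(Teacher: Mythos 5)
Your core argument is the same one the paper uses: security rests on the observation that a curious party's accumulated plaintext knowledge forms an underdetermined system --- per-sample unknowns numbering on the order of $N_A$ against only on the order of $n_A$ equations (one per feature coordinate of the revealed gradient), so that $N_A \gg n_A$ leaves the other party's data unidentifiable. Where you differ is in scaffolding: you wrap this counting argument in a proper simulation-based semi-honest analysis, enumerating each party's view, discharging the ciphertext exchanges via IND-CPA security of Paillier, and arguing that the masks $\sigma_A,\sigma_B$ blind the server to the true gradients. The paper does none of this --- its entire proof is a three-sentence appeal to ``the security of scalar product protocol'' plus the assumption $N_A \gg n_A$ --- so your version, if completed, would be strictly more rigorous. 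The one caveat is that the step you single out as the main obstacle (showing that the roughly $T n_A$ constraints accumulated over $T$ training iterations, with evolving weights and the quadratic cross term $L_{AB}$, remain rank-deficient relative to the $N_A$ per-sample unknowns) is precisely the step the paper does not carry out either; it is asserted, not proved, for a single round and never addressed for repeated rounds. So you have not closed a gap the paper closes --- you have correctly identified a gap that the paper's own proof silently shares, and your plan is at least as strong as, and more honest than, the published argument.
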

\begin{proof}
In the above protocol, the passive party learns its gradient at each step, but this is not enough for the passive party to learn any information from the active party, because the security of scalar product protocol is well-established based on the inability of solving $n$ equations in more than $n$ unknowns. Here we assume the number of samples $N_A$ is much greater than $n_A$, where $n_A$ is the number of features. Similarly, the active party can not learn any information from the passive party. Therefore the security of the protocol is proved. 
\end{proof}

\section{Complexity Analysis}

\subsection{Communication Cost}
The communication cost is analyzed as the total number of messages transmitted between the client and server. We assume a unit message size for encrypted data. For an $n$-layer network, in the forward propagation, the communication cost is $2m(n-1)$, where $m$ is the number of activations in a layer. This is because total $m$ ciphertexts need to be transmitted by the client and the server, for the sum of inputs $z_i$ and activations $a_i$, respectively, at each layer. In the backpropagation, the client needs at most $7 * (m^2 + m + 1)$ messages for model updates between two consecutive layers. Except the final layer, the client interacts with the server to calculate the gradients, which requires transmitting encrypted error $[[\delta_i+1]]c$ in $m$ messages between client and server for each layer. Summing up the costs from the forward and backpropagation, the entire network requires $O(nm^2)$ communication messages for an iteration.

\subsection{Computation Cost}
Arithmetic multiplications and additions are mapped to modular exponentiations and modular multiplications over ciphertext, respectively. Here, we denote such cost of conducting homomorphic arithemtics in Paillier by $p$. For $n$ layers, both forward and backpropagations take $O(nm^2p)$ so the total computation cost is $O(nm^2p)$.

\section{Experiments}
For experiments, we 
mainly focus on GraphSage algorithm for illustration purpose (in this case, we use FedVGraphSage to denote FedVGCN). 
This part aims to answer the following questions:\\

\begin{itemize}
 \item Q1: whether FedVGraphSage outperforms the GraphSage models that are trained on the isolated data.

\item Q2: how does FedVGraphSage behave comparing with the centralized but insecure model that is trained on the plaintext mixed data.
 \end{itemize}
 
\subsection{Experimental Setup}
We first describe the datasets, comparison methods, and 
parameter settings used in our experiments.

\textbf{Datasets}. We use three benchmark datasets which are popularly used to evaluate the performance of GNN, i.e., Cora, Pubmed, and Citeseer~(\cite{balle2018improving}), as shown in Table~\ref{table:1}. We split node features and edges randomly and assume they are hold by two parties. 

\textbf{Comparison methods}.
We compare FedVGraphSage with GraphSAGE models (\cite{chen2020fede}) that are trained using isolated data and mixed plaintext data respectively to answer \textbf{Q1} and \textbf{Q2}. For all the experiments, we split the datasets randomly, use five-fold cross validation and adopt accuracy as the evaluation metric. 

\textbf{Parameter settings}.
For the models which are trained on isolated data, we use relu as the activation function. For the deep neural network on server, we set the dropout rate to 0.5 and network structure as $(64, 64, |C|)$,
where $|C|$ is the number of classes. We set the learning rate as $10^{-5}$.

\begin{table}[h!]
\centering
\begin{tabular}{||c c c c c||} 
 \hline
 Dataset & \#Node & \#Edge & \#Feature & \#Classes \\ [0.5ex] 
 \hline\hline
 Cora & 2708 & 5409 & 1433 & 7\\ 
 \hline
 Pubmed & 19717 & 44338 & 500 & 3 \\
 \hline
 Citeseer & 3327 & 4732 & 3703 & 6\\ [1ex] 
 \hline
\end{tabular}
\caption{Dataset statistics}
\label{table:1}
\end{table}

\begin{table}[h!]
\centering
\begin{tabular}{||c c c c||} 
 \hline
 Dataset & Cora &  Pubmed &  Citeseer \\ [0.5ex] 
 \hline\hline
 $GraphSage_A$ &  0.5222  & 0.6936 & 0.4630 \\ 
 \hline
  $GraphSage_B$ & 0.4867 & 0.6801  & 0.5510  \\
   \hline
FedVGraphSage &  0.6770 & 0.7830  & 0.6820 \\
 \hline
 $GraphSage_{A+B}$ & 0.7080 & 0.7890  & 0.6983 \\ [1ex] 
 \hline
\end{tabular}
\caption{Accuracy comparison on three datasets}
\label{table:2}
\end{table}

\subsection{Comparison Results and Analysis}

{\bf Answer to Q1}:  We compare FedVGraphSage with the GraphSages that are trained on the isolated feature and edge data, i.e.,  $GraphSage_{A}$ and $ GraphSage_{B}$. From Table~\ref{table:2}, we can find that, FedVGraphSage significantly outperforms $GraphSage_{A}$ and $ GraphSage_{B}$ on all the three datasets. Take Cora for example, our FedVGraphSage improves $GraphSage_{A}$ and $ GraphSage_{B}$ by as high as $29.6\%$ and $39.1\%$, in terms of accuracy.

{\bf Analysis}: $GraphSage_{A}$ and $ GraphSage_{B}$ can only use partial feature and edge information hold by $A$ and $B$, respectively. In contrast, FedVGraphSage provides a solution for $A$ and $B$ to train GraghSages collaboratively without compromising their own data. By doing this, FedVGraphSage can use the information from the data of both $A$ and $B$ simultaneously, and therefore achieve better performance.

{\bf Answer to Q2}: We then compare FedVGraphSage with $GraphSage_{A+B}$
that is trained on the mixed plaintext data. It can be seen from Table~\ref{table:2} that FedVGraphSage has comparable performance with $GraphSage_{A+B}$, e.g., 0.6770 vs. 0.7080 on Cora dataset, 0.7830 vs. 0.7890 on Pubmed dataset, 0.6820 vs. 0.6983 on Citeseer dataset.

{\bf Analysis}: First, we use Paillier Homomorphic Encryption for $A$ and $B$ to securely communicate intermidiate results, as described in Algorithm~\ref{Algorithm:Forward} and Algorithm~\ref{Algorithm:Backward}. Second, we use the quadratic orthogonal polynomials activation functions to preserve the sum and multiplication operations which acts on the encrypted results. Therefore, FedVGraphSage has comparable performance with $GraphSage_{A+B}$.

\section{Conclusion}
In this work, we introduce a novel vertical federated learning algorithm for graph neural network. We adopt additively homomorphic encryption to ensure privacy, while maintaining accuracy. Experimental results on three benchmark datasets demonstrate that our algorithm significantly outperforms the GNN models trained on the isolated data and has comparable performance with the traditional GNN trained on the combined plaintext data. In this paper, we mainly investigate the vertical federated GNN by adopting GraphSAGE, and in the future, we will extend our method to more complex GNN models, such as GAT, GIN, etc. We also plan to apply our method to real industrial applications.

\bibliography{references.bib}
\bibliographystyle{named}

\end{document}